\theoremstyle:=definition,remark,plain\do{%
        \expandafter\g@addto@macro\csname th@\theoremstyle\endcsname{%
            \addtolength\thm@preskip\parskip
            }%
        }
\crefname{lemma}{lemma}{lemmas}
\Crefname{lemma}{Lemma}{Lemmas}
\crefname{thm}{theorem}{theorems}
\Crefname{thm}{Theorem}{Theorems}
\crefname{prop}{proposition}{propositions}
\Crefname{prop}{Proposition}{Propositions}
\crefname{assumption}{assumption}{assumptions}
\crefname{assumption}{Assumption}{Assumptions}
\newcommand\dif{\mathop{}\!\mathrm{d}}
\newcommand{\cN}{\mathcal{N}}
\newcommand{\g}{\, | \,}
\newcommand{\E}[2]{\mathbb{E}_{#1}\left[#2\right]}
\def\adl@drawiv#1#2#3{%
        \hskip.5\tabcolsep
        \xleaders#3{#2.5\@tempdimb #1{1}#2.5\@tempdimb}%
                #2\z@ plus1fil minus1fil\relax
        \hskip.5\tabcolsep}
\newcommand{\cdashlinelr}[1]{%
  \noalign{\vskip\aboverulesep
           \global\let\@dashdrawstore\adl@draw
           \global\let\adl@draw\adl@drawiv}
  \cdashline{#1}
  \noalign{\global\let\adl@draw\@dashdrawstore
           \vskip\belowrulesep}}
\renewcommand{\epsilon}{\varepsilon}
\declaretheorem[style=plain,numberwithin=section,name=Theorem]{theorem}
\newenvironment{example*}
 {\pushQED{\qed}\example}
 {\popQED\endexample}
\numberwithin{equation}{section}
\newcommand{\Reals}{\mathbb{R}}
\DeclareMathOperator*{\argmin}{argmin}
\DeclareMathOperator*{\logit}{logit}
\newcommand{\EE}{\mathbb{E}}
\renewcommand{\Pr}{\mathrm{P}}
\newcommand{\given}{\mid}
\newcommand{\abs}[1]{\left\lvert#1 \right\rvert}
\newcommand{\dist}{\ \sim\ }
\newcommand{\distiid}{\overset{\mathrm{iid}}{\dist}}
\newcommand{\cdo}{\mathrm{do}} 
\newcommand{\bernDist}{\mathrm{Bern}}
\providecommand\given{} 
\newcommand\SetSymbol[1][]{
  \nonscript\,#1:\nonscript\,\mathopen{}\allowbreak}
\DeclarePairedDelimiterX\Set[1]{\lbrace}{\rbrace}%
{ \renewcommand\given{\SetSymbol[]} #1 }
\crefname{example}{Example}{Examples}
\crefname{lemma}{Lemma}{Lemmas}
\crefname{cor}{Corollary}{Corollaries}
\crefname{theorem}{Theorem}{Theorems}
\crefname{assumption}{Assumption}{Assumptions}
\newcommand{\maxf}[1]{{\cellcolor[gray]{0.8}} #1}
\global\long\def\embedding{\lambda}
\global\long\def\globparam{\gamma}
\newcommand{\psiaiptw}{\hat{\psi}^{\mathrm{A}}}
\definecolor{WowColor}{rgb}{.75,0,.75}
\definecolor{SubtleColor}{rgb}{0,0,.50}
\newcommand{\LATER}[1]{\textcolor{SubtleColor}{ {\tiny \bf ($\dagger$)} #1}}
\newcommand{\TBD}[1]{\textcolor{SubtleColor}{ {\tiny \bf (!)} #1}}
\newcommand{\PROBLEM}[1]{\textcolor{WowColor}{ {\bf (!!)} {\bf #1}}}
\newcounter{margincounter}
\newcommand{\displaycounter}{{\arabic{margincounter}}}
\newcommand{\incdisplaycounter}{{\stepcounter{margincounter}\arabic{margincounter}}}
\newcommand{\fTBD}[1]{\textcolor{SubtleColor}{$\,^{(\incdisplaycounter)}$}\marginpar{\tiny\textcolor{SubtleColor}{ {\tiny $(\displaycounter)$} #1}}}
\newcommand{\fPROBLEM}[1]{\textcolor{WowColor}{$\,^{((\incdisplaycounter))}$}\marginpar{\tiny\textcolor{WowColor}{ {\bf $\mathbf{((\displaycounter))}$} {\bf #1}}}}
\newcommand{\fLATER}[1]{\textcolor{SubtleColor}{$\,^{(\incdisplaycounter\dagger)}$}\marginpar{\tiny\textcolor{SubtleColor}{ {\tiny $(\displaycounter\dagger)$} #1}}}
\renewcommand{\LATER}[1]{}
\renewcommand{\fLATER}[1]{}
\renewcommand{\TBD}[1]{}
\renewcommand{\fTBD}[1]{}
\renewcommand{\PROBLEM}[1]{}
\renewcommand{\fPROBLEM}[1]{}
\title{Using Embeddings to Correct for Unobserved Confounding in Networks}
\date{}
\author[1]{Victor Veitch}
\author[1]{Yixin Wang}
\author[1,2]{David M. Blei}
\affil[1]{Department of Statistics, Columbia University}
\affil[2]{Department of Computer Science, Columbia University}
\begin{document}
\maketitle

\begin{abstract}
  We consider causal inference in the presence of unobserved confounding.
  We study the case where a proxy is
  available for the unobserved confounding in the form of a network connecting
  the units.
  For example, the link structure of a social network carries information
  about its members.
  We show how to effectively use the proxy to do causal inference.
  The main idea is to reduce the causal estimation problem to a
  semi-supervised prediction of both the treatments and outcomes.
  Networks admit high-quality embedding models that can
  be used for this semi-supervised prediction.
  We show that the method yields valid inferences under suitable (weak) conditions on the quality of
  the predictive model.  We validate the method with experiments on a
  semi-synthetic social network dataset.
  Code is available at \href{https://github.com/vveitch/causal-network-embeddings}{github.com/vveitch/causal-network-embeddings}.  
\end{abstract}

\section{Introduction}

We consider causal inference in the presence of unobserved
confounding, i.e., where unobserved variables may affect both the
treatment and the outcome.  We study the case where there is an
observed proxy for the unobserved confounders, but (i) the proxy has
non-iid structure, and (ii) a well-specified generative model for the
data is not available.

\begin{example*}
We want to infer the efficacy of a drug based on observed outcomes of
people who are connected in a social network.  Each unit $i$ is a
person.  The treatment variable $t_{i}$ indicates whether they took
the drug, a response variable $y_{i}$ indicates their health outcome,
and latent confounders $z_{i}$ might affect the treatment or
response. For example, $z_{i}$ might be unobserved age or sex.  We
would like to compute the average treatment effect, controlling for
these confounds.  We assume the social network itself is associated
with $z$, e.g., similar people are more likely to be friends. This
means that the network itself may implicitly contain confounding
information that is not explicitly collected.
\end{example*}

In this example, inference of the causal effect would be
straightforward if the confounder $z$ were available.  So,
intuitively, we would like to infer substitutes for the latent $z_{i}$
from the underlying social network structure. Once inferred, these
estimates $\hat{z}_{i}$ could be used as a substitute for $z_{i}$ and we could estimate the causal effect \cite{Shalizi:McFowland:2016}.

For this strategy to work, however, we need a well-specified
generative model (i.e., joint probability distribution) for $z$ and
the full network structure.  But typically no such model is
available. For example, generative models of networks with latent unit
structure---such as stochastic block models \cite{Wang:Wong:1987,Airoldi:Blei:Fienberg:Xing:2008}
or latent space models \cite{Hoff:Handcock:2002}---miss 
properties of real-world networks
\cite{Durrett:2006,Newman:2009,Orbanz:Roy:2015}.
Causal estimates based on substitutes inferred from misspecified models are inherently suspect.

Embedding methods offer an alternative to fully specified generative
models. Informally, an embedding method assigns a real-valued
embedding vector $\hat{\embedding}_{i}$ to each unit, with the aim
that conditioning on the embedding should decouple the properties of
the unit and the network structure.  For example, $\hat{\lambda}_{i}$ might
be chosen to explain the local network structure of user $i$.

The embeddings are learned by minimizing an objective function over
the network, with no requirement that this objective correspond to any
generative model.  For pure predictive tasks, e.g., classification of
vertices in a graph, embedding-based approaches are state of the art
for many real-world datasets
\citep[e.g.,][]{Perozzi:Al-Rfou:Skiena:2014,Chamberlain:Clough:Deisenroth:2017,Hamilton:Ying:Leskovec:2017:review,Hamilton:Ying:Leskovec:2017:inductive,Veitch:Austern:Zhou:Blei:Orbanz:2018}.
This suggests that network embeddings might be usefully adapted to the
inference of causal effects.


The method we develop here stems from the following insight.  Even if
we knew the confounders $\{z_{i}\}$ we would not actually use all the
information they contain to infer the causal effect. Instead, if we
use estimator $\hat{\psi}_{n}$ to estimate the effect $\psi$, then we
only require the part of $z_i$ that is actually used by
the estimator $\hat{\psi}_{n}$. 
For example, if $\hat{\psi}_{n}$ is an inverse
probability weighted estimator \cite{Cole:Hernan:2008} then we require
only estimates for the propensity scores $\Pr(T_{i}=1\given z_{i})$
for each unit.


What this means is that if we can build a good \emph{predictive} model
for the treatment then we can plug the outputs into a causal effect estimate directly, 
without any need to learn the true $z_{i}$.
The same idea applies generally by using a predictive model for both the treatment and outcome.
Reducing the causal inference problem to a predictive
problem is the crux of this paper.
It allows us to replace the assumption of a well-specified model
with the more palatable assumption that the black-box embedding method
produces a strong predictor.

The contributions of this paper are:
\begin{itemize}[nosep]
\item a procedure for estimating treatment effects using network embeddings;
\item an extension of robust estimation results to (non-iid) network data, showing the method yields valid estimates under weak conditions;
\item and, an empirical study of the method on social network data. 
\end{itemize}

\section{Related Work}
Our results connect to a number of different areas.

\parhead{Causal Inference in Networks.} 
Causal inference in networks has attracted significant attention \citep[e.g.,][]{Shalizi:McFowland:2016,TchetgenTchetgen:Fulcher:Shpitser:2017,Ogburn:Sofrygin:Diaz:vanderLaan:2017,Ogburn:VanderWeele:2017,Ogburn:2018}.
Much of this work is aimed at inferring the causal effects of treatments applied using the network;
e.g., social influence or contagion.
A major challenge in this area is that homophily---the tendency of similar people to cluster in a network---is
generally confounded with contagion---the influence people have on their neighbors \cite{Shalizi:Thomas:2011}.
In this paper, we assume that each person's treatment and outcome are independent of the network
once we know that person's latent attributes; i.e., we assume pure homophily.
This is a reasonable assumption in some situations, but certainly not all.
Our major motivation is simply that pure homophily is the simplest case,
and is thus the natural proving ground for the use of black-box methods in causal network problems.
It is an import future direction to extend the results developed here to the contagion case.

\citet{Shalizi:McFowland:2016} address the homophily/contagion issue with a two-stage estimation procedure.
They first estimate latent confounders (node properties),
then use these in a regression based estimator in the second stage.
Their main result is a proof that if the network was actually generated by either a stochastic block model or a latent space model
then the estimation procedure is valid.
Our main motivation here is to avoid such  well-specified model assumptions.  
Their work is complementary to our approach: we impose a weaker assumption, but we only address homophily.

\parhead{Causal Inference Using Proxy Confounders.} 
Another line of connected research deals with causal inference with hidden confounding when 
there is an observed proxy for the confounder \cite{Kuroki:Miyakawa:1999, Pearl:2012, Kuroki:Pearl:2014, Miao:Geng:TchetgenTchetgen:2018, Louizos:Shalit:Mooij:Sontag:Zemel:Welling:2017}.
This work assumes the data is generated 
independently and identically as $(X_i, Z_i, T_i, Y_i) \distiid P$ for some data generating distribution $P$.
The variable $Z_i$ causally affects $T_i$, $Y_i$, and $X_i$.
The variable(s) $X_i$ are interpreted as noisy versions of $Z_i$.
The main question here is when the causal effect is (non-parametrically) identifiable. 
The typical flavor of the results is: if the proxy distribution satisfies certain conditions then the marginal distribution $P(Z_i,T_i,Y_i)$ is identifiable,
and thus so too is the causal effect.
The main difference with the problem we address here is that we consider proxies with non-iid structure and
we do not demand recovery the true data generating distribution.



\parhead{Double machine learning.} 
\citet{Chernozhukov:Chetverikov:Demirer:Duflo:Hansen:Newey:Robins:2017} addresses robust estimation of
causal effects in the i.i.d. setting. 
Mathematically, our main estimation result, \cref{thm:CAN}, is a fairly straightforward adaptation of their result.
The important distinction is conceptual: we treat a different data generating scenario. 

\parhead{Embedding methods.} \citet{Veitch:Sridhar:Blei:2019} use the strategy of reducing causal estimation to prediction
to harness text embedding methods for causal inference with text data.

\section{Setup}

We first fix some notation and recall some necessary ideas about the
statistical estimation of causal effects. We take each statistical
unit to be a tuple $O_i=(Y_{i},T_{i},Z_{i})$, where $Y_{i}$ is the
response, $T_{i}$ is the treatment, and $Z_{i}$ are
(possibly confounding) unobserved attributes of the units.
We assume that the units are drawn independently and
identically at random from some distribution $P$, i.e.,
$O_{i}\distiid P$.
We study the case where there is a network connecting the units.
We assume that the treatments and outcomes are independent of the network
given the latent attributes $\{Z_{i}\}$.
This condition is implied by the (ubiquitous) exchangeable network assumption
\cite{Orbanz:Roy:2015,Veitch:Roy:2015,Crane:Dempsey:2016}, though
our requirement is weaker than exchangeability.


The average treatment effect of a binary outcome is defined as
\[
\psi=\EE[Y\given\cdo(T=1)]-\EE[Y\given\cdo(T=0)].
\]
The use of Pearl's $\cdo$ notation indicates that the effect of interest
is causal: what is the expected outcome if we intervene by assigning the treatment
to a given unit? If $Z_{i}$ contains all common influencers (a.k.a. confounders) of $Y_{i}$
and $T_{i}$ then the causal effect is identfiable as a parameter
of the observational distribution:
\begin{equation}
\psi=\EE[\EE[Y\given Z,T=1]-\EE[Y\given Z,T=0]].\label{eq:psi_def}
\end{equation}

Before turning to the unobserved $Z$ case, 
we recall some ideas from the case where $Z$ is observed.
Let $Q(t,z)=\EE[Y\given t,z]$
be the conditional expected outcome, and $\hat{Q}_{n}$ be an estimator for this function. 
Following \ref{eq:psi_def}, a natural choice of estimator $\hat{\psi}_{n}$ is:
\begin{equation*}
\hat{\psi}_{n}^{Q}=\frac{1}{n}\sum_{i}\left[\hat{Q}_{n}(1,z_{i})-\hat{Q}_{n}(0,z_{i})\right].
\end{equation*}
That is, $\psi$ is estimated by a two-stage procedure: First, produce
an estimate for $\hat{Q}_{n}$.
Second, plug $\hat{Q}_{n}$ into a pre-determined statistic to compute the estimate.

Of course, $\hat{\psi}_{n}^{Q}$ is not the only possible
choice of estimator. In principle, it is possible to do better by
incorporating estimates $\hat{g}_n$ of the propensity scores $g(z)=\Pr(T=1\given z)$.
The augmented inverse probability of treatment weighted (A-IPTW) estimator $\psiaiptw_{n}$
is an important example \cite{Robins:Rotnitzky:vanderLaan:2000,Robins:2000}:
\begin{equation}
\label{eq:psiaiptw-canon}
\psiaiptw_{n} = \frac{1}{n}\sum_{i}\hat{Q}_{n}(1,z_{i})-\hat{Q}_{n}(0,z_{i})
+\frac{1}{n}\sum_{i}\left(\frac{I[t_{i}=1]}{\hat{g}_{n}(z_{i})}-\frac{I[t_{i}=0]}{1-\hat{g}_{n}(z_{i})}\right)(y_{i}-\hat{Q}_{n}(t_{i},z_{i})).
\end{equation}
We call $\eta(z) = (Q(0,z), Q(1,z), g(z))$ the nuisance parameters.
The main advantage of $\psiaiptw_{n}$ is that
it is robust to misestimation of the nuisance parameters \cite{Robins:Rotnitzky:Zhao:1994,vanderLaan:Rose:2011,Chernozhukov:Chetverikov:Demirer:Duflo:Hansen:Newey:Robins:2017}.
For example, it has the double robustness property: $\hat{\psi}_{n}$
is consistent if either $\hat{g}_{n}$ or $\hat{Q}_{n}$ is consistent.
If both are consistent, then $\psiaiptw_{n}$ is
the asymptotically most efficient possible estimator~\cite{Bickel:Klaassen:Ritov:Wellner:2000}. 
We will show below that the good theoretical properties of the suitably modified A-IPTW estimator
persist for the embedding method even in the non-iid setting of this paper.

There is a remaining complication. In the general case, if the same
data $\boldsymbol{O}_{n}$ is used to estimate $\hat{\eta}_{n}$ and
to compute $\hat{\psi}_{n}(\boldsymbol{O}_{n};\hat{\eta}_{n})$ then
the estimator is not guaranteed to maintain good asymptotic properties.
This problem can be solved by splitting the data, using one part to estimate $\hat{\eta}_{n}$
and the other to compute the estimate \cite{Chernozhukov:Chetverikov:Demirer:Duflo:Hansen:Newey:Robins:2017}.
We rely on this data splitting approach.

\section{Estimation}
\label{sec:estimation}

\global\long\def\samp{\mathsf{Sample}}
\global\long\def\crossent{\mathsf{CrossEntropy}}

We now return to the setting where the $\{z_{i}\}$ are unobserved,
but a network proxy is available.

Following the previous section, we want to hold out
a subset of the units $i\in I_{0}$ and, for each of these units, produce estimates
of the propensity score $g(z_{i})$ and the conditional expected outcome $Q(t_{i},z_{i})$.
Our starting point is (an immediate corollary of) \citep[][Thm.~3]{Rosenbaum:Rubin:1983}:
\begin{theorem}\label{thm:prediction-suffices}
  Suppose $\lambda(z)$ is some function of the latent attributes such that
  at least one of the following is $\lambda(Z)$-measurable:
  (i) $(Q(1,Z), Q(1,Z))$, or (ii) $g(Z)$.
  If adjusting for $Z$ suffices to render the average treatment effect identifiable
  then adjusting for only $\lambda(Z)$ also suffices.
  That is, $\psi=\EE[\EE[Y\given \lambda(Z), T=1]-\EE[Y\given \lambda(Z), T=0]]$
\end{theorem}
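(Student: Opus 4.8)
The plan is to reduce the claim to the identity
\[
\EE\big[\,\EE[Y \given \lambda(Z), T=1] - \EE[Y \given \lambda(Z), T=0]\,\big] = \EE[Q(1,Z) - Q(0,Z)],
\]
since by the identifiability hypothesis (\cref{eq:psi_def}) the right-hand side is exactly $\psi$. Both cases share a common first step. Because $\lambda(Z)$ is a function of $Z$, conditioning on $Z$ refines conditioning on $\lambda(Z)$, so the tower property gives
\begin{equation*}
\EE[Y \given \lambda(Z), T=t] = \EE\!\left[\,\EE[Y \given Z, T=t] \given \lambda(Z), T=t\,\right] = \EE[Q(t,Z) \given \lambda(Z), T=t]
\end{equation*}
for $t \in \{0,1\}$. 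The two hypotheses then furnish two different ways to simplify the inner conditional expectation.

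In case (i), the prognostic-score case, the conditional outcome functions $Q(0,Z)$ and $Q(1,Z)$ are $\lambda(Z)$-measurable, so I can write $Q(t,Z) = q_t(\lambda(Z))$ for measurable $q_0, q_1$. The integrand in the display is then already $\lambda(Z)$-measurable, so conditioning leaves it unchanged: $\EE[Q(t,Z) \given \lambda(Z), T=t] = Q(t,Z)$. Subtracting the $t=1$ and $t=0$ terms and taking the outer expectation yields $\EE[Q(1,Z) - Q(0,Z)]$ directly.

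In case (ii), the balancing-score case, $g(Z)$ is $\lambda(Z)$-measurable, say $g(Z) = h(\lambda(Z))$. The key step is the Rosenbaum--Rubin balancing property: since $\Pr(T=1 \given Z) = g(Z) = h(\lambda(Z))$ depends on $Z$ only through $\lambda(Z)$, and $\Pr(T=1\given\lambda(Z)) = \EE[g(Z)\given\lambda(Z)] = h(\lambda(Z))$ as well, we get $T \indpt Z \given \lambda(Z)$. Hence $T \indpt Q(t,Z) \given \lambda(Z)$ for the (deterministic function of $Z$) $Q(t,Z)$, so I may drop the treatment from the conditioning: $\EE[Q(t,Z) \given \lambda(Z), T=t] = \EE[Q(t,Z) \given \lambda(Z)]$. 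Taking the outer expectation and collapsing the tower, $\EE[\EE[Q(t,Z) \given \lambda(Z)]] = \EE[Q(t,Z)]$, again gives $\EE[Q(1,Z) - Q(0,Z)]$.

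The only nontrivial ingredient is the conditional-independence step in case (ii), which is precisely \citet[Thm.~3]{Rosenbaum:Rubin:1983}; the care needed there is to confirm $\Pr(T=1 \given Z, \lambda(Z)) = \Pr(T=1 \given Z)$ so that the propensity genuinely factors through $\lambda(Z)$. Two loose ends deserve a remark each. First, positivity for $\lambda(Z)$ is automatic: $\Pr(T=1 \given \lambda(Z)) = \EE[g(Z) \given \lambda(Z)]$ lies in $(0,1)$ whenever $g(Z)$ does, so the conditional expectations $\EE[Y \given \lambda(Z), T=t]$ are well defined under the standing overlap assumption on $Z$. Second, I invoke the identifiability hypothesis only once, at the very end, to equate the resulting $\EE[Q(1,Z) - Q(0,Z)]$ with $\psi$, rather than re-deriving \cref{eq:psi_def}.
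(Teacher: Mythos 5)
Your proof is correct. The paper gives no written proof of this statement---it is presented as an immediate corollary of \citep[Thm.~3]{Rosenbaum:Rubin:1983}---and your argument is exactly the one that citation gestures at: the balancing-score conditional-independence argument ($T \indpt Z \given \lambda(Z)$ when $g(Z)$ factors through $\lambda(Z)$) for case (ii), together with the routine tower-property/measurability argument for case (i), so you have simply filled in the details the paper delegates to the reference.
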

The significance of this result is that
adjusting for the confounding effect of the latent attributes
does not actually require us to recover the latent attributes.
Instead, it suffices to recover only the aspects $\lambda(z_i)$ that are relevant
for the prediction of the propensity score or conditional expected outcome.

The idea is that we may view network embedding methods as black-box tools
for extracting information from the network that is relevant to solving prediction problems.
We make use of embedding based semi-supervised prediction models.
 What this means is that we assign an embedding $\embedding_{i}\in\Reals^{p}$
to each unit, and define predictors $\tilde{Q}(t_{i},\lambda_{i};\globparam^{Q})$
mapping the embedding and treatment to a prediction for $y_{i}$,
and predictor $\tilde{g}(\lambda_{i};\globparam^{g})$ mapping the
embeddings to predictions for $t_{i}$.
In this context, `semi-supervised' means that when training the model 
we do not use the labels of units in $I_{0}$, but 
we do use all other data---including the proxy structure on units in $I_{0}$.

An example clarifies the general approach.
\begin{example*} We denote the network $G_{n}$. We assume a continuous valued outcome. 
Consider the case where $\tilde{Q}(0,\cdot;\globparam^{Q}),\ \tilde{Q}(1,\cdot;\globparam^{Q})$
and $\logit\tilde{g}(\cdot;\globparam^{g})$ are all linear predictors.
We train a model with a relational empirical risk minimization procedure \cite{Veitch:Austern:Zhou:Blei:Orbanz:2018}. 
We set:
\begin{equation*}
\hat{\lambda}_{n},\hat{\gamma}_{n}^{Q},\hat{\gamma}_{n}^{g}=\argmin_{\embedding,\globparam^{Q},\globparam^{g}}\EE_{G_{k}=\samp(G_{n},k)}[L(G_{k};\lambda,\globparam^{Q},\globparam^{g})]
\end{equation*}
where $\samp(G_{n},k)$ is a randomized sampling algorithm that returns
a random subgraph of size $k$ from $G_{n}$ (e.g., a random walk with $k$ edges), and
\begin{align*}
  L(G_{k};\lambda,\globparam^{Q},\globparam^{g}) &=
                                                   \sum_{i\in I\backslash I_{0}}(y_{i}-\tilde{Q}(t_{i},\lambda_{i};\globparam^{Q}))^{2}
                                                   +\sum_{i\in I\backslash I_{0}}\crossent(t_{i},\tilde{g}(\lambda_{i};\globparam^{g})) \\
&\qquad + \sum_{i,j\in I\times I}\crossent(1[(i,j)\in G_{k}],\sigma(\embedding_{i}^{T}\embedding_{j})).
\end{align*}
Here, $I$ is the full set of units, and $1[(i,j)\in G_{k}]$
indicates whether units $i$ and $j$ are linked. Note that the final
term of the model is the one that explains the relational structure.
Intuitively, it says that the logit probability of an edge is the inner product of the embeddings of
the end points of the edge. 
This loss term makes use of the entire dataset, including
links that involve the heldout units. This is important to ensure
that the embeddings for the heldout data `match' the rest of the embeddings.
\end{example*}

\parhead{Estimation.} With a trained model in hand, computing the estimate of the treatment
effect is straightforward. Simply plug-in the estimated values
of the nuisance parameters to a standard estimator. For example,
using the A-IPTW estimator \cref{eq:psiaiptw-canon},
\begin{equation}
\label{eq:psiaiptw}
\begin{split}
\psiaiptw_{n}(I_{0}) &:= \frac{1}{\abs{I_{0}}}\sum_{i\in I_{0}}\tilde{Q}(1,\hat{\embedding}_{n,i};\hat{\globparam}_{n}^{Q})-\tilde{Q}(0,\hat{\embedding}_{n,i};\hat{\globparam}_{n}^{Q}) \\
 &\qquad +\frac{1}{\abs{I_{0}}}\sum_{i\in I_{0}} \bigg( \frac{I[t_{i}=1]}{\tilde{g}(\hat{\embedding}_{n,i};\hat{\globparam}_{n}^{g} )}- \frac{I[t_{i}=0]}{1-\tilde{g}(\hat{\embedding}_{n,i};\hat{\globparam}_{n}^{g})} \bigg)  (y_{i}-\tilde{Q}(t_{i},\hat{\embedding}_{n,i};\hat{\globparam}_{n}^{Q})).
\end{split}
\end{equation}
We also allow for a more sophisticated variant. We split the data into $K$ folds $I_{0},\dots,I_{K-1}$
and define our estimator as:
\begin{equation}
\label{k-fold-aiptw}
\psiaiptw_{n}=\frac{1}{K}\sum_{j}\psiaiptw_{n}(I_{j}).
\end{equation}
This variant is more data efficient than just using a single fold.
Finally, the same procedure applies to estimators other than
the A-IPTW. We consider the effect of the choice of estimator in \cref{sec:experiments}.

\section{Validity}
\label{sec:validity}

When does the procedure outlined in the previous section yield valid
inferences?
We now present a theorem establishing sufficient conditions. The result is an adaption of the 
``double machine learning'' of \citet{Chernozhukov:Chetverikov:Demirer:Duflo:Hansen:Newey:Robins:2017, Chernozhukov:Chetverikov:Demirer:Duflo:Hansen:Newey:2017}
to the network setting.
We first give the technical statement, and then discuss its significance and interpretation.

Fix notation as in the previous section. 
We also define $\hat{\globparam}^{Q, I^c_k}_n$
and $\hat{\globparam}^{g, I^c_k}_n$ to be the estimates for $\globparam_{Q},\globparam_{g}$
calculated using all but the $k$th data fold.
\begin{restatable}{assumption}{asetup}
\label{assumption:setup}
The  probability distributions $P$ satisfies
\begin{align*}
Y = Q(T,Z)+\zeta, \qquad &\EE[\zeta \g Z,T]=0, \\
T=g(Z)+\nu, \qquad &\EE[\nu\g Z]=0.
\end{align*}
\end{restatable}

\begin{restatable}{assumption}{aidentification}
\label{assumption:identification}
There is some function $\lambda$ mapping features $Z$ into $\Reals^p$ such that
$\lambda$ satisfies the condition of \cref{thm:prediction-suffices}, and
each of $||\tilde{Q}_n(0, \hat{\lambda}_{n,i}; \hat{\gamma}_{Q, I^c_k})-Q(0,\lambda(Z_i))||_{P,2}$, $||\tilde{Q}_n(1, \hat{\lambda}_{n,i}; \hat{\gamma}_{Q, I^c_k})-Q(1,\lambda(Z_i))||_{P,2}$, and $||\tilde{g}_n(\hat{\lambda}_{n,i};\hat{\gamma}_{g, I^c_k})-g(\lambda(Z_i))||_{P,2}$ goes to $0$ as $n\to\infty$.
Additionally, $\lambda$ must satisfy all of the following assumptions.
\end{restatable}

\begin{restatable}{assumption}{amoment}
\label{assumption:moment}
The following moment conditions hold for some fixed $\epsilon, C, c$, some $q > 4$, and all $t\in \{0,1\}$  
\begin{align*}
 ||Q(t,\lambda(Z))||_{P,q}&\leq C,\\
 ||Y||_{P,q}&\leq C,\\
 P(\varepsilon\leq g(\lambda(Z))\leq 1-\varepsilon )&=1,\\
 P(\E{P}{\zeta^2 \g \lambda(Z)}\leq C)&=1,\\
 ||\zeta||_{P,2}&\geq c, \\
 ||\nu ||_{P,2} &\geq c.
\end{align*}
\end{restatable}

\begin{restatable}{assumption}{arate}
\label{assumption:rate}
The estimators of nuisance parameters satisfy the following accuracy requirements.
There is some $\delta_n, \Delta_{n_K}\rightarrow 0$ such that
for all $n\geq 2K$ and $d\in\{0,1\}$ it holds with probability no less than $1-\Delta_{n_K}$:
\begin{equation}
\label{eq:rate}
||\tilde{Q}_n(d, \hat{\lambda}_{n,i}; \hat{\gamma}_{Q, I^c_k})-Q(d,\lambda(Z_i))||_{P,2} \cdot||\tilde{g}_n(\hat{\lambda}_{n,i} ;\hat{\gamma}_{g, I^c_k})-g(\lambda(Z_i))||_{P,2} \leq\delta_{n_K} \cdot n_K^{-1/2}
\end{equation}

And,
\begin{equation}\label{eq:est_positivity}
P(\varepsilon\leq \tilde{g}_n(\hat{\lambda}_{n,i};\hat{\gamma}_{g, I^c_k})\leq 1-\varepsilon )=1,
\end{equation}
\end{restatable}

\begin{restatable}{assumption}{adependence}
\label{assumption:dependence}
We assume the dependence between the trained embeddings is not too strong: For any $i,j$ and  all bounded continuous functions $f$ with mean 0,
\begin{align}
\E{}{f(\hat{\lambda}_{n,i})\cdot f(\hat{\lambda}_{n,j})} = o(\frac{1}{n}).
\end{align}
\end{restatable}

\begin{theorem}\label{thm:CAN}
Denote the true ATE as $\psi$.
Let $\hat{\psi}_{n}$ be the $K$-fold A-IPTW variant defined in \cref{k-fold-aiptw}.
Under \Cref{assumption:setup,assumption:identification,assumption:moment,assumption:rate,assumption:dependence}, $\hat{\psi}_n$ 
concentrates around $\psi$ with the rate
$1/\sqrt{n}$ and is approximately unbiased and normally distributed:
\begin{align*}
&\sigma^{-1} \sqrt{n} ( \hat{\psi}_{n} - \psi ) \stackrel{d}{\rightarrow} \cN ( 0 , 1 ) \\
&\sigma^2 =\E{P}{\varphi_0^2(Y, T, \lambda(Z); \theta_0, \eta(\lambda(Z)))},
\end{align*} 
where
\begin{equation*}
\begin{split}
\varphi_0(Y, T, \lambda(Z); \theta_0, \eta(\lambda(Z))) &= \frac{T}{g(\lambda(Z))}\{Y- Q(1,\lambda(Z))\} - \frac{1-T}{1-g(\lambda(Z))}\{Y-Q(0,\lambda(Z))\} \\
 &\qquad + \{Q(1,\lambda(Z))-Q(0,\lambda(Z))\} - \psi.
\end{split}
\end{equation*}
\end{theorem}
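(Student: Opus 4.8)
The plan is to follow the cross-fitting argument of \citet{Chernozhukov:Chetverikov:Demirer:Duflo:Hansen:Newey:Robins:2017}, isolating the two places where the network structure forces a departure from the i.i.d.\ template. The starting point is purely algebraic: for any plugged-in nuisance values, the per-unit summand of the A-IPTW estimator in \cref{eq:psiaiptw} equals $\varphi_0(Y_i,T_i,\lambda(Z_i);\psi,\eta)+\psi$ with $\eta$ set to those values. Hence, writing $\hat\eta_k$ for the estimates built on all folds but the $k$th, the $K$-fold estimator of \cref{k-fold-aiptw} satisfies
\[
\hat\psi_n-\psi=\frac{1}{K}\sum_k\frac{1}{n_K}\sum_{i\in I_k}\varphi_0\big(Y_i,T_i,\hat\lambda_{n,i};\psi,\hat\eta_k\big).
\]
I would then add and subtract the same average evaluated at the true nuisances $\eta_0=\eta(\lambda(Z))$, splitting $\hat\psi_n-\psi$ into a \emph{main term} $\frac{1}{n}\sum_i\varphi_0(\cdots;\psi,\eta_0)$ and a \emph{remainder} that collects the nuisance errors fold by fold.

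For the main term, note that under \cref{assumption:setup} the summands $\varphi_0(Y_i,T_i,\lambda(Z_i);\psi,\eta_0)$ are a fixed measurable function of the i.i.d.\ tuples $O_i$ (with $\lambda$, $\eta_0$, and $\psi$ all fixed), so they are i.i.d.\ and mean zero; the centering follows from the identification in \cref{thm:prediction-suffices} together with $\EE[\zeta\mid Z,T]=0$. The moment and positivity bounds of \cref{assumption:moment} guarantee a finite moment of order $q>4$, so a Lyapunov CLT gives $\sqrt n\,\frac{1}{n}\sum_i\varphi_0(\cdots;\psi,\eta_0)\convDist\cN(0,\sigma^2)$ with $\sigma^2=\E{P}{\varphi_0^2}$, exactly the claimed limit.

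The remainder I would split, fold by fold, into a conditional-bias piece and a stochastic-fluctuation piece, conditioning throughout on $\hat\eta_k$, which by cross-fitting is independent of the units $i\in I_k$. The conditional-bias piece is where the double-robust structure pays off: because $\varphi_0$ is Neyman-orthogonal, its first-order sensitivity to the nuisances vanishes at the truth, so the conditional expectation of the remainder reduces to a \emph{product} of the outcome- and propensity-model errors. Cauchy--Schwarz together with the estimated-positivity bound \cref{eq:est_positivity} then controls it by $\|\tilde Q_n-Q\|_{P,2}\,\|\tilde g_n-g\|_{P,2}\le\delta_{n_K}\,n_K^{-1/2}$ from \cref{eq:rate}, which is $o_p(n^{-1/2})$.

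The main obstacle is the stochastic-fluctuation piece, and it is precisely here that the non-i.i.d.\ network enters. After subtracting its conditional mean, this piece averages terms over $i\in I_k$ that still involve the \emph{trained} embeddings $\hat\lambda_{n,i}$; since all embeddings are fit jointly on the whole graph, these terms are \emph{not} conditionally independent across units, so the textbook ``conditionally i.i.d., variance $O(1/n_K)$'' bound is unavailable. I would instead expand the variance as a double sum of covariances: the $n_K$ diagonal terms contribute $O(n_K^{-1})\cdot o(1)$ by the $L^2$-consistency of \cref{assumption:identification}, while the $\approx n_K^2$ off-diagonal covariances are each $o(1/n)$ by the weak-dependence \cref{assumption:dependence}, so their sum is still $o(n_K^{-1})$; thus the fluctuation is $o_p(n^{-1/2})$. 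Collecting the three pieces and applying Slutsky yields $\sigma^{-1}\sqrt n(\hat\psi_n-\psi)\convDist\cN(0,1)$. The delicate point throughout is matching the embedding-dependent summands to the form covered by \cref{assumption:dependence} and verifying that this assumption is exactly strong enough to annihilate the off-diagonal covariances with no i.i.d.\ structure to fall back on.
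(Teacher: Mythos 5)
Your proposal is correct and takes essentially the same route as the paper's proof: both adapt the double-machine-learning cross-fitting argument, decomposing the error into a CLT-controlled main term at the true nuisances $\eta(\lambda(Z))$, a bias term killed by the Neyman-orthogonal product-of-errors bound via Cauchy--Schwarz, \cref{eq:est_positivity}, and \cref{eq:rate}, and a residual empirical-process term. Your treatment of the jointly trained embeddings---expanding the variance as a double sum with diagonal terms $o(1)$ by the $L^2$-consistency in \cref{assumption:identification} and off-diagonal covariances $o(1/n)$ by \cref{assumption:dependence}---is precisely the paper's key modification of the i.i.d.\ template in its main step.
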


\begin{proof}
The proof follows \citet{Chernozhukov:Chetverikov:Demirer:Duflo:Hansen:Newey:2017}.
The main changes are technical modifications exploiting
\cref{assumption:dependence} to allow for the use of the full data in the embedding training. 
We defer the proof to the appendix.
\end{proof}

\parhead{Interpretation and Significance.}
\cref{thm:CAN} promises us that, under suitable conditions, the 
treatment effect is identifiable and can be estimated at a
fast rate. It is not surprising that there are some conditions 
under which this holds. The insight from \cref{thm:CAN} 
lies with the particular assumptions that are required.

\cref{assumption:setup,assumption:moment} are standard conditions.
\cref{assumption:setup} posits a causal model that 
(i) restricts the treatments and outcomes to a pure unit effect (i.e., 
it forbids contagion effects), and that
(ii) renders the causal effects identifiable when $Z$ observed.
\cref{assumption:moment} is technical conditions on the data generating distribution.
This assumption includes the standard positivity condition. 
Possible violations of these conditions are important 
and must be considered carefully in practice.
However, such considerations are standard, independent of the non-iid, no-generative-model setting 
that is our focus, so we do not comment further.

Our first deviation from the standard causal inference setup is \cref{assumption:identification}.
This is the identification condition when $Z$ is not observed.
It requires that the learned embeddings are able to extract whatever information is relevant
to the prediction of the treatment and outcome.
This assumption is the crux of the method.

A more standard assumption would directly posit the
relationship between $Z$ and the proxy network; e.g., by assuming a stochastic block model
or latent space model.
The practitioner is then required to assess whether the posited model is realistic.
In practice, all generative models of networks fail to capture the structure of real-world networks.
Instead, we ask the practitioner to judge the plausibility of the \emph{predictive} embedding model.
Such judgements are non-falsifiable, and must be based on experience with the methods
and trials on semi-synthetic data.
This is a difficult task, but the assumption is at least not violated a priori.

In practice, we do not expect the identification assumption to hold exactly.
Instead, the hope is that applying the method will adjust for whatever confounding information
is present in the network.
This is useful even if there is confounding exogenous to the network.
We study the behavior of the method in the presence of exogenous confounding in \cref{sec:experiments}. 

The condition in \cref{assumption:rate} addresses the
\emph{statistical} quality of the nuisance parameter estimation procedure.
For an estimator to be useful, it must produce accurate estimates 
with a reasonable amount of data. It is intuitive that if accurately estimating 
the nuisance parameters requires an enormous amount of data, 
then so too will estimation of $\psi$.
\cref{eq:rate} shows that this is not so. 
It suffices, in principle, to estimate the nuisance parameters crudely, 
e.g., a rate of $o(n^{1/4})$ each. 
This is important because the need to estimate the embeddings may 
rule out parametric-rate convergence of the nuisance parameters.
\cref{thm:CAN} shows this is not damning.

\cref{assumption:dependence} is the price we pay for training the embeddings with the full data. 
If the pairwise dependence between the learned embeddings is very strong
then the data splitting procedure does not guarantee that the estimate is valid.
However, the condition is weak and holds empirically.
The condition can also be removed by a two-stage procedure where the
embeddings are trained in an unsupervised manner and then used as a
direct surrogate for the confounders. 
However, such approaches have relatively poor predictive performance \cite{Yang:Cohen:Salakhudinov:2016,Veitch:Austern:Zhou:Blei:Orbanz:2018}. We compare to the two-stage approach in \cref{sec:experiments}.

\section{Experiments}
\label{sec:experiments}
The main remaining questions are:
Is the method able to adjust for confounding in practice?
If so, is the joint training of embeddings and classifier important?
And, what is the best choice of plug-in estimator for the second stage of the procedure?
Additionally, what happens in the (realistic) case that the network does not
carry all confounding information?

We investigate these questions with experiments on a semi-synthetic network dataset.\footnote{Code and pre-processed data at \href{https://github.com/vveitch/causal-network-embeddings}{github.com/vveitch/causal-network-embeddings}}
We find that in realistic situations, the network adjustment improves the estimation of the average treatment effect.
The estimate is closer to the truth than estimates from either a parametric baseline, or a two-stage embedding procedure.
Further, we find that network adjustment improves estimation quality even in the presence of confounding that is
exogenous to the network. That is, the method still helps even when full identification is not possible.
Finally, as predicted by theory, we find that the robust estimators are best when the theoretical assumptions hold.
However, the simple conditional-outcome-only estimator has better performance in the presence of significant exogenous confounding.

\subsection{Setup}
\parhead{Choice of estimator.} 
We consider 4 options for the plug-in treatment effect estimator.
\begin{enumerate}[nosep]
\item The conditional expected outcome based estimator,
\[
\hat{\psi}_{n}^{Q}=\frac{1}{n}\sum_{i}\left[\tilde{Q}_{n}(1,\hat{\embedding}_{n,i}; \hat{\globparam}_n)-\tilde{Q}_{n}(0,\hat{\embedding}_{n,i}; \hat{\globparam}_n) \right],
\]
which only makes use of the outcome model.
\item The inverse probability of treatment weighted estimator,
\[
\hat{\psi}_n^g = \frac{1}{n}\sum_i \left[ \frac{1[t_i = 1]}{\tilde{g}(\hat{\embedding}_{n,i}; \hat{\globparam}_n)} - \frac{1[t_i = 0]}{1-\tilde{g}(\hat{\embedding}_{n,i}; \hat{\globparam}_n)} \right] Y_i,
\]
which only makes use of the treatment model.
\item The augmented inverse probability treatment estimator $\psiaiptw_{n}$, defined in \cref{eq:psiaiptw}.
\item A targeted minimum loss based estimator (TMLE) \cite{vanderLaan:Rose:2011}.
\end{enumerate}
The later two estimators both make full use of the nuisance parameter estimates.
The TMLE also admits the asymptotic guarantees of \cref{thm:CAN} (though we only state the theorem for the simpler A-IPTW estimator). 
The TMLE is a variant designed for better finite sample performance.

\parhead{Pokec.}
To study the properties of the procedure, we generate semi-synthetic data using a real-world social network.
We use a subset of the Pokec social network. 
Pokec is the most popular online social network in Slovakia. 
For our purposes, the main advantages of Pokec are: 
the anonymized data are freely and openly available \cite{Takac:Zabovsky:2012,SNAP} \footnote{\url{snap.stanford.edu/data/soc-Pokec.html}},
and the data includes significant attribute information for the 
users, which is necessary for our simulations.
We pre-process the data to restrict to three districts (\v{Z}ilina, Cadca, Namestovo),
all within the same region (\v{Z}ilinsk\'{y}). 
The pre-processed network has 79 thousand users connected by 1.3 million links.

\parhead{Simulation.} 
We make use of three user level attributes in our simulations:
the \texttt{district} they live in, the user's \texttt{age}, and their Pokec \texttt{join date}.
These attributes were selected because they have low missingness and
have some dependency with the the network structure.
We discretize age and join date to a 3-level categorical variable (to match district).

For the simulation, we take each of these attributes to be the hidden confounder.
We will attempt to adjust for the confounding using the Pokec network.
We take the probability of treatment to be wholly
determined by the confounder $z$, with the three levels corresponding to $g(z) \in \{0.15, 0.5, 0.85 \}$.
The treatment and outcome for user $i$ is simulated from their confounding attribute $z_i$ as:
\begin{align}\label{eqn:sim-model}
t_i &= \bernDist(g(z_i)), \\
y_i &= t_i + \beta(g(z_i) - 0.5) + \epsilon_i \qquad \epsilon_i \dist N(0,1).
\end{align}
In each case, the true treatment effect is $1.0$.
The parameter $\beta$ controls the amount of confounding.

\parhead{Estimation.}
For each simulated dataset, we estimate the nuisance parameters using the procedure described in \cref{sec:estimation} with
$K=10$ folds.
We use a random-walk sampler with negative sampling with the default relational ERM settings \cite{Veitch:Austern:Zhou:Blei:Orbanz:2018}.  
We pre-train the embeddings using the unsupervised objective only, run until convergence.

\parhead{Baselines.}
We consider three baselines.
The first is the naive estimate that does not attempt to control for confounding;
i.e., $\frac{1}{m}\sum_{i:t_i=1}y_i - \frac{1}{n-m}\sum_{i:t_i=0}y_i$,
where $m$ is the number of treated individuals.
The second baseline is the two-stage procedure, where we first train the embeddings on the
unsupervised objective, freeze them, and then use them as features for the same predictor maps. 
The final baseline is a parametric approach to controlling for the confounding. 
We fit a mixed-membership stochastic block model \cite{Gopalan:Blei:2013} to the data, with 128 communities (chosen to match the embedding dimension).
We predict the outcome using a linear regression of the outcome on the community identities and the treatment. 
The estimated treatment effect is the coefficient of the treatment.

\subsection{Results}
\begin{wrapfigure}{r}{0.5\textwidth}
  \vspace{-25pt}
  \begin{center}
    \includegraphics[width=0.48\textwidth]{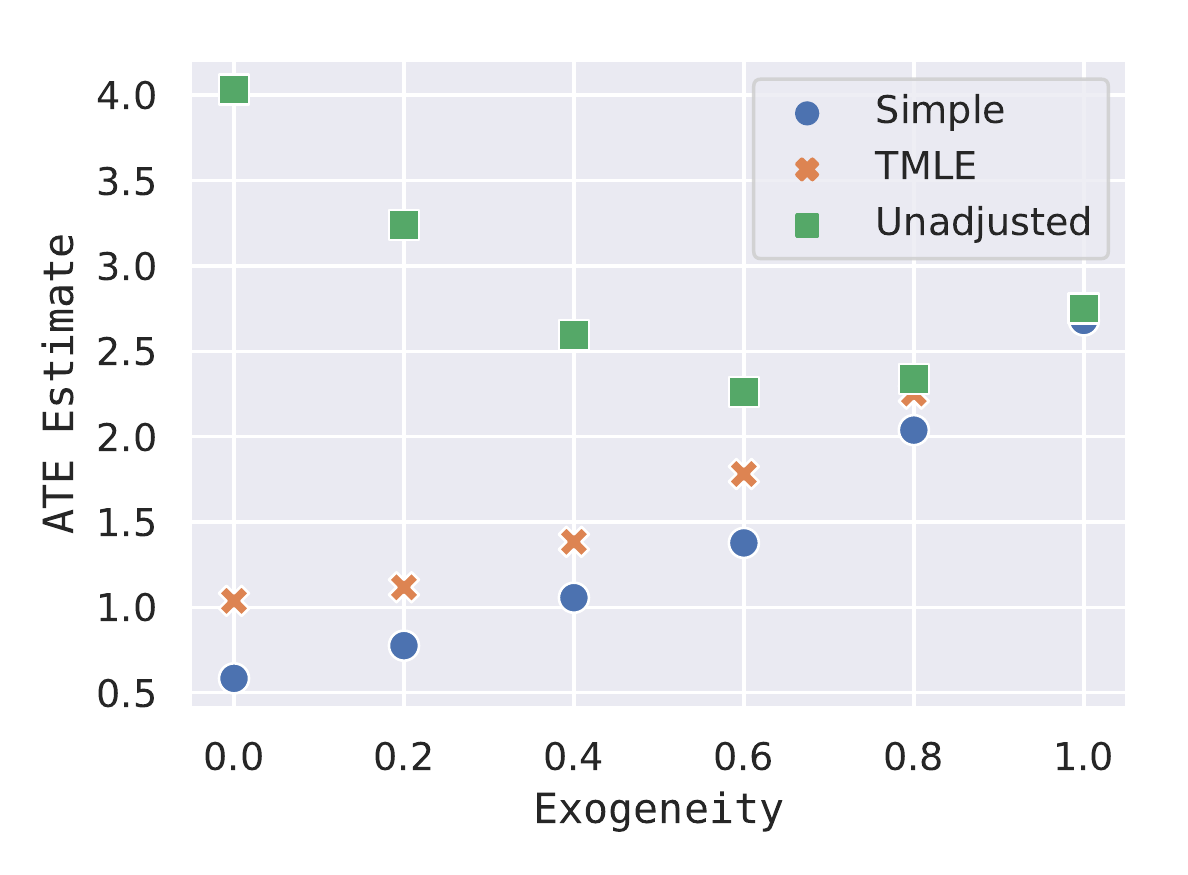}
  \end{center}
  \caption{Adjusting for the network helps even when the
    no exogenous confounding assumption is violated.
    The robust TMLE estimator is the best estimator when no assumptions are violated.
    The simple conditional-outcome-only estimator (``Simple'') is better in the presence of moderate exogeneity.  
    Plot shows estimates of ATE from district simulation.  Ground truth is 1.   \label{fig:district-exogeneity}}
   \vspace{-20pt}
\end{wrapfigure}
\parhead{Comparison to baselines.}
\sisetup{
table-number-alignment=center,
separate-uncertainty=true,
table-figures-integer = 1,
table-figures-decimal = 2}
\begin{table}
\centering
\caption{Adjusting using the network improves ATE estimate in all cases.
  Further, the single-stage method is more accurate than baselines.
  Table entries are estimated ATE with 10-fold std. Ground truth is 1.0.
Low and high confounding correspond to $\beta=1.0$ and $10.0$.}
\label{tb:baseline_comparison}
\begin{tabular}{l
                S[separate-uncertainty,table-figures-uncertainty=1]
                S[separate-uncertainty,table-figures-uncertainty=1]
                S[separate-uncertainty,table-figures-uncertainty=1]
                S[separate-uncertainty,table-figures-uncertainty=1]
                S[separate-uncertainty,table-figures-uncertainty=1]
                S[separate-uncertainty,table-figures-uncertainty=1]} 
\vspace{1pt}
  & \multicolumn{2}{c}{\texttt{age}} & \multicolumn{2}{c}{\texttt{district}} & \multicolumn{2}{c}{\texttt{join date}} \\ 
 \multicolumn{1}{r}{Conf.}  & {Low} & {High} & {Low} & {High} & {Low} & {High} \\
  \midrule
     Unadjusted & 1.32\pm0.02  & 4.34\pm0.05 & 1.34\pm0.03 & 4.51\pm0.05  & 1.29\pm0.03 & 4.03\pm0.06 \\
     Parametric & 1.30\pm0.00  & 4.06\pm0.01 & 1.21\pm0.00 & 3.22\pm0.01  & 1.26\pm0.00 & 3.73\pm0.01  \\
     Two-stage & 1.33\pm0.02  & 4.55\pm0.05 & 1.34\pm0.02 & 4.55\pm0.05  & 1.30\pm0.03 & 4.16\pm0.06  \\
     $\psiaiptw_n$ & \maxf{1.24\pm0.04} & \maxf{3.40\pm0.04} & \maxf{1.09\pm0.02} & \maxf{2.03\pm0.07} & \maxf{1.21\pm0.05} & \maxf{3.26\pm0.09} 
\end{tabular}
\end{table}
We report comparisons to the baselines in \cref{tb:baseline_comparison}.
As expected, adjusting for the network improves estimation in every case.
Further, the one-stage embedding procedure is more accurate than baselines.

\parhead{Choice of estimator.}
\begin{table}
\centering
\caption{The conditional-outcome-only estimator is usually most accurate.
  Table entries are estimated ATE with 10-fold std. Ground truth is 1.0.
Low and high confounding correspond to $\beta=1.0$ and $10.0$.}
\label{tb:estimator_comparison}
\begin{tabular}{l
                S[separate-uncertainty,table-figures-uncertainty=1]
                S[separate-uncertainty,table-figures-uncertainty=1]
                S[separate-uncertainty,table-figures-uncertainty=1]
                S[separate-uncertainty,table-figures-uncertainty=1]
                S[separate-uncertainty,table-figures-uncertainty=1]
                S[separate-uncertainty,table-figures-uncertainty=1]} 
\vspace{1pt}
  & \multicolumn{2}{c}{\texttt{age}} & \multicolumn{2}{c}{\texttt{district}} & \multicolumn{2}{c}{\texttt{join date}} \\ 
 \multicolumn{1}{r}{Conf.}  & {Low} & {High} & {Low} & {High} & {Low} & {High} \\
  \midrule
  $\hat{\psi}^Q_n$ & \maxf{1.05\pm0.24} & \maxf{2.77\pm0.35} & \maxf{1.03\pm0.25} & 1.75\pm0.20 & \maxf{1.17\pm0.35} & \maxf{2.41\pm0.45} \\ 
  $\hat{\psi}^g_n$ & 1.27\pm0.03 & 3.12\pm0.06 & 1.10\pm0.03 & \maxf{1.66\pm0.07} & 1.29\pm0.05 & 3.10\pm0.07 \\
  $\psiaiptw_n$ & 1.24\pm0.04 & 3.40\pm0.04 & 1.09\pm0.02 & 2.03\pm0.07 & 1.21\pm0.05 & 3.26\pm0.09 \\
  $\hat{\psi}^{\mathrm{TMLE}}_n$ & 1.21\pm0.03 & 3.26\pm0.07 & 1.09\pm0.04 & 2.02\pm0.05 & 1.20\pm0.05 & 3.13\pm0.09 \\

\end{tabular}
\end{table}
We report comparisons of downstream estimators in \cref{tb:estimator_comparison}.
The conditional-outcome-only estimator usually yields the best estimates,
substantially improving on either robust method.
This is likely because the network does not carry all information about the confounding factors,
violating one of our assumptions.
We expect that \texttt{district} has the strongest dependence with the network,
and we see best performance for this attribute.
Poor performance of robust estimators when assumptions are violated has been observed in other contexts \cite{Kang:Schafer:2007}.

\parhead{Confounding exogenous to the network.}
In practice, the network may not carry information about all sources of confounding.
For instance, in our simulation, the confounders may not be wholly predictable from the network structure.
We study the effect of exogenous confounding by
a second simulation where the confounder consists of a part that can be fully inferred
from the network and part that is wholly exogenous.

For the inferrable part, we use the estimated propensity scores $\{\hat{g}_i\}$ from the
\texttt{district} experiment above.
By construction, the network carries all information about each $\hat{g}_i$.
We define the (ground truth) propensity score for our new simulation as 
$\logit g_{\text{sim}} = (1-p)\logit\hat{g}_i + p \xi_i$, with $\xi_i \distiid \mathrm{N}(0,1)$.
The second term, $\xi_i$, is the exogenous part of the confounding.
The parameter $p$ controls the level of exogeneity. We simulate treatments and outcomes as in \cref{eqn:sim-model}.

In \cref{fig:district-exogeneity} we plot the estimates at various levels of exogeneity.
We observe that network adjustment helps even when the no exogenous confounding assumption is violated.
Further, we see that the robust estimator has better performance when $p=0$, i.e., when the assumptions
of \cref{thm:CAN} are satisfied. However, the conditional-outcome-only estimator is better if there is
substantial exogenous confounding.

\printbibliography

\appendix

\newpage
\section{Proof of Main Result}
We now give the proof of \cref{thm:CAN}, which establishes identifiability, consistency, and asymptotic normality.

Recall our setup: 
\begin{itemize}
\item $Y$: outcome; $T$: treatment; $Z$: confounder.

\item $Z$ is unobserved. We use some non-iid additional structure as a proxy.

\item $(Y_i, T_i, Z_i) \distiid P$. 

\item $Q(t,z) = \E{}{Y\g t,z}$; $g(Z) = P(T=1\g Z)$

\item The target parameter is the ATE,
\[
\psi_0 =\E{}{Q(1,Z)-Q(0,Z)}.
\]
\end{itemize}

\parhead{The estimator and the algorithm.} 
Recall that we learn the nuisance parameters $Q$, $g$, and the embeddings $\embedding$ using 
a semi-supervised embedding-based predictor.
We allow a slightly more general construction of the estimator than in the body of the paper.
In the body, we state the result only for the A-IPTW. Here, we allow any estimator that solves the efficient estimating equations.
This allows, for example, for targeted minimum loss based estimation.

Step 1. Form a $K$-fold partition; the splits are $I_k, k=1, \ldots, K$.
For each set $I_k$, let $I^c_k$ denote the units not in $I_k$.

Construct $K$ estimators $\check{\psi}(I^c_k), k=1,
\ldots, K$:
\begin{enumerate}
	\item Estimate the nuisance parameters $Q$, $g$, and the embedding $\embedding$:
\[
\hat{\eta}(I^c_k):= \left(\hat{\embedding}_i, \tilde{g}_n(\cdot;\hat{\globparam}^{g, I^c_k}_n), \tilde{Q}_n(\cdot, \cdot; \hat{\globparam}^{Q, I^c_k}_n)\right)
\]
\item $\check{\psi}(I^c_k)$ is a solution to the following equation:
\begin{align*}
\frac{1}{n_K}\sum_{i\in I_k} \varphi\left(Y_i, T_i, Z_i; \psi_0,  \hat{\embedding}_i, \tilde{g}_n(\cdot;\hat{\embedding}_i, \hat{\globparam}^{g, I^c_k}_n), \tilde{Q}_n(\cdot, \cdot; \hat{\globparam}^{Q, I^c_k}_n)\right) = 0,
\end{align*}
where the $\varphi(\cdot)$ function is the efficient score:
\begin{align*}
&\varphi(Y, T, Z; \psi_0, \embedding, \tilde{g}_n, \tilde{Q}_n) \\
=& \frac{T}{\tilde{g}_n(\embedding)}\{Y- \tilde{Q}_n(1,\embedding)\} - \frac{1-T}{1-\tilde{g}_n(\embedding)}\{Y-\tilde{Q}_n(0,\embedding)\} + \{\tilde{Q}_n(1,\embedding)-\tilde{Q}_n(0,\embedding)\} - \psi_0.
\end{align*}
We note that $\varphi$ does not depend on the unobserved $Z$. 
\end{enumerate}

Step 2. The final estimator for the ATE $\psi_0$ is
\begin{align*}
\tilde{\psi} = \frac{1}{K}\sum^K_{k=1}\check{\psi}(I^c_k).
\end{align*}

\parhead{The theorem and the proof.}

\asetup*
\aidentification*
\amoment*
\arate*
\adependence*

\begin{theorem}[Validity]
Denote the true ATE as 
\begin{align*}
\psi_0 = \E{P}{Q(1, Z) - Q(0, Z)}.
\end{align*}
Under \cref{assumption:setup,assumption:identification,assumption:moment,assumption:rate,assumption:dependence} the estimator
$\tilde{\psi}$ concentrates around $\psi_0$ with the rate
$1/\sqrt{n}$ and is approximately unbiased and normally distributed:
\begin{align*}
&\sigma^{-1} \sqrt{n} ( \tilde{\psi} - \psi_0 ) \stackrel{d}{\rightarrow} \cN ( 0 , 1 ) \\
&\sigma^2 =\E{P}{\varphi^2_0\left(W;\psi_0,\eta(\lambda(Z))\right)},
\end{align*} 
where 
\begin{align*}
W &= (Y,T,\lambda(Z)),\\
\eta(\lambda(Z)) &= (g(\lambda(Z)), Q(T,\lambda(Z))),
\end{align*} 
and
\begin{align*}
&\varphi_0(Y, T, \lambda(Z); \psi_0, \eta(\lambda(Z))) \\
=& \frac{T}{g(\lambda(Z))}\{Y- Q(1,\lambda(Z))\} - \frac{1-T}{1-g(\lambda(Z))}\{Y-Q(0,\lambda(Z))\} + \{Q(1,\lambda(Z))-Q(0,\lambda(Z))\} - \psi_0.
\end{align*}
\end{theorem}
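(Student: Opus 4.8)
The plan is to follow the double machine learning template of \citet{Chernozhukov:Chetverikov:Demirer:Duflo:Hansen:Newey:2017}, adapting the cross-fitting argument to accommodate the fact that the embeddings $\hat{\embedding}_{n,i}$ are trained on the full network, which couples the held-out fold with the data used to fit the nuisances. The starting observation is that the efficient score $\varphi$ is affine in the target parameter with slope $-1$, so the estimating equation in Step 2 of the algorithm admits a closed-form solution, $\check{\psi}(I^c_k) = \frac{1}{n_K}\sum_{i\in I_k}\varphi(W_i; 0, \hat{\eta}(I^c_k))$. This lets me write the scaled per-fold error as an oracle term plus a nuisance-perturbation remainder,
\begin{equation*}
\sqrt{n_K}\big(\check{\psi}(I^c_k)-\psi_0\big) = \frac{1}{\sqrt{n_K}}\sum_{i\in I_k}\varphi_0(W_i;\psi_0,\eta_0) + R_k,
\end{equation*}
where $\eta_0=(g,Q)$ is evaluated at the truth and $R_k$ collects the difference between the fitted score and the oracle score.

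First I would dispatch the leading term. Because $\varphi_0$ is evaluated at the true nuisance functions and the true $\embedding(Z_i)$, and $(Y_i,T_i,Z_i)\distiid P$, the summands are genuinely i.i.d., mean zero (the moment condition $\E{P}{\varphi_0}=0$ holds at the truth), and have finite variance $\sigma^2$ by \cref{assumption:moment}. A standard Lindeberg CLT then yields asymptotic normality of this term. Crucially, the dependence introduced by the trained embeddings never touches this piece; it enters only through $R_k$.

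Next I would split $R_k$ into a bias part and a centered empirical-process part. The bias part is $\sqrt{n_K}$ times the conditional expectation of the score difference given $\hat{\eta}(I^c_k)$. By Neyman orthogonality of $\varphi$ (the defining property of the efficient/A-IPTW score, which makes its Gateaux derivative in the nuisance direction vanish at the truth), the first-order contribution cancels, leaving a second-order term controlled by the product $\|\tilde{Q}_n-Q\|_{P,2}\cdot\|\tilde{g}_n-g\|_{P,2}$. \cref{assumption:rate}, eq.~\eqref{eq:rate}, bounds exactly this product by $\delta_{n_K}\,n_K^{-1/2}$, so the scaled bias is $O(\delta_{n_K})=o(1)$, while the positivity bound \eqref{eq:est_positivity} keeps the inverse-propensity weights well-defined throughout.

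The hard part will be the centered empirical-process term, and this is precisely where \cref{assumption:dependence} does its work. In ordinary cross-fitting this term vanishes because $\hat{\eta}(I^c_k)$ is independent of the evaluation data in $I_k$, rendering the summands conditionally i.i.d.\ with small conditional variance; here that independence is broken, since $\hat{\embedding}_{n,i}$ for $i\in I_k$ is computed from the whole graph, including edges incident to $I_k$. My plan is to bound the second moment of this term directly: the diagonal contribution shrinks to zero by consistency of the nuisance estimates (\cref{assumption:identification}), and for the off-diagonal contribution I would appeal to \cref{assumption:dependence}, under which each pairwise covariance of the centered summands is $o(1/n)$; summing over the $O(n_K^2)$ pairs and dividing by $n_K$ leaves $n_K\cdot o(1/n)=o(1)$. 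Hence the empirical-process term is $o_P(1)$ despite the coupling. Finally, averaging the per-fold expansions over the $K$ folds --- the oracle terms combine into a single i.i.d.\ average over all $n$ units, while every remainder is $o_P(1)$ --- and invoking Slutsky yields $\sigma^{-1}\sqrt{n}(\tilde{\psi}-\psi_0)\convDist\cN(0,1)$ with the stated $\sigma^2$.
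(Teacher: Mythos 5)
Your proposal is correct and follows essentially the same route as the paper's proof: the same oracle-plus-remainder decomposition, the bias term killed by the doubly robust product structure together with \cref{assumption:rate} (the paper's Step~3 does this explicitly via Cauchy--Schwarz), and the embedding-induced coupling handled by a direct second-moment bound in which \cref{assumption:dependence} controls the $O(n_K^2)$ off-diagonal covariances, exactly as in the paper's Step~1. The only cosmetic difference is that the paper routes the diagonal part of that second moment through its Step~2 (mean-square convergence of the fitted score, using \cref{assumption:rate,assumption:moment}, plus a Lemma~B.1/Chebyshev argument) where you cite consistency from \cref{assumption:identification}; the substance is the same.
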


\begin{proof}
  We prove the result for the special case where $\lambda$ is the identity map.
  By \cref{assumption:identification} this is without loss of generality---it's the case
  where all of the information in $Z$ is relevant for prediction.
  This is not an important mathematical point, but substantially simplifies notation.
  
The proof follow the same idea as in \citet{Chernozhukov:Chetverikov:Demirer:Duflo:Hansen:Newey:2017}
with a few modifications accounting for the non-iid proxy structure.

We start with some notation.
\begin{enumerate}
	\item $||\cdot||_{P,q}$ denotes the $L_q(P)$ norm. For example, for measurable $f:\mathcal{W}\stackrel{d}{\rightarrow} \mathbb{R}$,
	\begin{align*}
	||f(W)||_{P,q}:= (\int |f(w)^q\dif P(w)|)^{1/q}.
	\end{align*}
	\item The empirical process $\mathbb{G}_{n, I}(f(W))$ for $||f(W_i)||_{P,2} < \infty$ is
	\begin{align*}
	\mathbb{G}_{n, I}(f(W)):= \frac{1}{\sqrt{n}}\sum_{i\in I}(f(W_i)-\int f(w)\dif P(w)).
	\end{align*}
	\item The empirical expectation and probability is
	\begin{align*}
	\E{n,I}{f(W)} :=\frac{1}{n}f(W_i); \qquad \mathbb{P}_{n,I}(A) :=\frac{1}{n}\sum_{i\in I} 1(W_i \in A).
	\end{align*}
\end{enumerate} 
Let $\mathbb{P}_n$ be the empirical measure.

Step 1: (Main Step). Letting $\check{\psi}_{k} = \check{\psi}(I^c_k)$, we first write
\begin{align}
\label{eq:firstexpansion}
\sqrt{n} ( \check{\psi}_k - \psi_0 ) = \mathbb{G}_{n,I^c_k} \varphi ( W ; \psi_0 , \hat{\eta} ( I^c_k ) ) + \sqrt{n} \int \varphi ( w ; \psi_0 , \hat{\eta}( I^c_k ) ) \dif \mathbb{P}_n ( w ) ,
\end{align}
where
\begin{align*}
\hat{\eta}(I^c_k):= \left(\hat{\embedding}_i, \tilde{g}_n(\cdot;\hat{\globparam}^{g, I^c_k}_n), \tilde{Q}_n(\cdot, \cdot; \hat{\globparam}^{Q, I^c_k}_n)\right)
\end{align*}
as is defined earlier.

Steps 2 and 3 below demonstrate that for each $k = 1, \ldots, K$,
\begin{equation}
\int(\varphi(w;\psi_0 ,\hat{\eta}(I^c_k))-\varphi_0(w;\psi_0 ,\eta(z)))^2\dif \mathbb{P}_n(w)=o_{\mathbb{P}_n}(1), \label{eq:step2}
\end{equation}
and that
\begin{equation}
\sqrt{n} \int \varphi(w;\psi_0 ,\hat{\eta}(I^c_k))\dif \mathbb{P}_n(w) = o_{\mathbb{P}_n}(1).\label{eq:step3}
\end{equation}

\Cref{eq:step2} implies
\begin{align*}
\mathbb{G}_{n,I^c_k}\left(\varphi(w;\psi_0 ,\hat{\eta}(I^c_k))-\varphi_0(w;\psi_0 ,\eta(z))\right) = o_{\mathbb{P}_n}(1)
\end{align*}
due to Lemma B.1 of \citet{Chernozhukov:Chetverikov:Demirer:Duflo:Hansen:Newey:2017} and the Chebychev's
inequality.

We note that $\hat{\eta}(I^c_k) = \left(\hat{\embedding}_i,
\tilde{g}_n(\cdot;\hat{\globparam}^{g, I^c_k}_n), \tilde{Q}_n(\cdot,
\cdot; \hat{\globparam}^{Q, I^c_k}_n)\right)$, where the embedding
$\hat{\embedding}_i$'s are not independent. By contrast, $\eta(z)$ only
depends on $Z_i$ where all $Z_i$'s are independent.

We next show $\sigma^{-1}\sqrt{n_K}(\check{\psi}_k -\psi_0 )_{k=1}^K =\sigma^{-1}\mathbb{G}_{n,I^c_k} \varphi_0(W;\psi_0 ,\eta(Z) )_{k=1}^K +o_{\mathbb{P}_n}(1)$.

First, we notice
\begin{align*}
&\E{}{[\sqrt{n_K}(\check{\psi}_k -\psi_0 ) - \mathbb{G}_{n,I^c_k} \varphi_0(W;\psi_0 ,\eta(Z) )]^2\g I^c_k}\\
=&\E{}{[\mathbb{G}_{n,I^c_k} \varphi ( W ; \psi_0 , \hat{\eta} ( I^c_k ) ) - \mathbb{G}_{n,I^c_k} \varphi_0(W;\psi_0 ,\eta(Z) ) + o_{\mathbb{P}_n}(1)]^2\g I^c_k}\\
=&\E{}{(\mathbb{G}_{n,I^c_k} \varphi ( W ; \psi_0 , \hat{\eta} ( I^c_k ) ))^2\g I^c_k}  + \E{}{(\mathbb{G}_{n,I^c_k} \varphi_0(W;\psi_0 ,\eta(Z) ))^2\g I^c_k}\\
& - 2 \E{}{(\mathbb{G}_{n,I^c_k} \varphi ( W ; \psi_0 , \hat{\eta} ( I^c_k ) )\cdot (\mathbb{G}_{n,I^c_k} \varphi_0(W;\psi_0 ,\eta(Z) ))\g I^c_k} + o_{\mathbb{P}_n}(1)
\end{align*}

The first equality is due to \cref{eq:firstexpansion} and
\cref{eq:step2}. The second equality is due to
\begin{align}
\label{eq:empiricalmeanzero}
\E{}{\mathbb{G}_{n,I^c_k} \varphi ( W ; \psi_0 , \hat{\eta} ( I^c_k
) )} = \E{}{\mathbb{G}_{n,I^c_k} \varphi_0(W;\psi_0 ,\eta(Z) )} =
0.
\end{align}

If we write $\bar{\varphi}(W_i):= \varphi(W_i) - \int \varphi(w)\dif \mathbb{P}_n ( w ) $,
we have
\begin{align*}
&\E{}{[\sqrt{n_K}(\check{\psi}_k -\psi_0 ) - \mathbb{G}_{n,I^c_k} \varphi_0(W;\psi_0 ,\eta(Z) )]^2\g I^c_k}\\
=&\frac{1}{n}\E{}{\sum_{i,j=1}^{n_K}\bar{\varphi}(W_i ; \psi_0 , \hat{\eta} ( I^c_k )) \cdot \bar{\varphi}(W_j ; \psi_0 , {\eta} ( I^c_k ))\g I^c_k} \\
&+ \frac{1}{n}\E{}{\sum_{i,j=1}^{n_K}\bar{\varphi}_0(W_i ; \psi_0 , {\eta} ( Z_i )) \cdot \bar{\varphi}_0(W_j ; \psi_0 , \hat{\eta} ( Z_j ))}\\
&- 2 \E{}{(\mathbb{G}_{n,I^c_k} \varphi ( W ; \psi_0 , \hat{\eta} ( I^c_k ) )\g I^c_k}\cdot \E{}{(\mathbb{G}_{n,I^c_k} \varphi_0(W;\psi_0 ,\eta(Z) ))} +o_{\mathbb{P}_n}(1)\\
= & \frac{1}{n}\sum_{i,j=1}^{n_K} o(\frac{1}{n}) + \frac{1}{n}\sum_{i,j=1}^{n_K}\E{}{\bar{\varphi}_0(W_i ; \psi_0 , {\eta} ( Z_i ))} \cdot \E{}{\bar{\varphi}_0(W_j ; \psi_0 , \hat{\eta} ( Z_j ))}+ o_{\mathbb{P}_n}(1)\\
= & o_{\mathbb{P}_n}(1)
\end{align*}

The second equality is due to \Cref{assumption:dependence}, the
independence of $W_i$'s, and \Cref{eq:empiricalmeanzero}.

By Lemma B.1 of \citet{Chernozhukov:Chetverikov:Demirer:Duflo:Hansen:Newey:2017},
\begin{align*}
\E{}{[\sqrt{n_K}(\check{\psi}_k -\psi_0 ) - \mathbb{G}_{n,I^c_k}
\varphi_0(W;\psi_0 ,\eta(Z) )]^2\g I^c_k} =o_{\mathbb{P}_n}(1)
\end{align*}
implies 
\begin{align*}
\sqrt{n_K}(\check{\psi}_k -\psi_0 ) -
\mathbb{G}_{n,I^c_k} \varphi_0(W;\psi_0 ,\eta(Z) ) =
o_{\mathbb{P}_n}(1)
\end{align*}

Therefore, we have
\begin{align*}
\sigma^{-1}\sqrt{n_K}(\check{\psi}_k -\psi_0 )_{k=1}^K 
=\sigma^{-1}\mathbb{G}_{n,I^c_k} \varphi_0(W;\psi_0 ,\eta(Z) )_{k=1}^K +o_{\mathbb{P}_n}(1) \stackrel{d}{\rightarrow} (\cN_k)_{k=1}^K
\end{align*}
where $(\cN_k)_{k=1}^K$ is a Gaussian vector with independent
$\cN(0,1)$ coordinates. Using the independence of $Z_i$'s and the
central limit theorem, we have
\begin{align*}
&\sigma^{-1}\sqrt{n}(\tilde{\psi} -\psi_0 )\\
=&\sigma^{-1}\sqrt{n}(\frac{1}{K}\sum^K_{k=1}(\check{\psi}_k-\psi_0))\\
=&\frac{1}{K}\sigma^{-1}\sum^K_{k=1}\mathbb{G}_{n,I^c_k} \varphi_0(W;\psi_0 ,\eta(Z) ) +o_{\mathbb{P}_n}(1) \\
\stackrel{d}{\rightarrow}&\frac{1}{K}\sum^K_{k=1}\cN_k = \cN(0,1).
\end{align*}

Step 2: This step demonstrates \Cref{eq:step2}. Observe that for some
constant $C_\varepsilon $ that depends  only on $\varepsilon$  and
$\mathcal{P}$,
\begin{align*}
||\varphi(W;\psi_0,\hat{\eta}(I^c_k))-\varphi(W;\psi_0, \eta(Z))||_{\mathbb{P}_n,2}\leq C_\varepsilon (\mathcal{I}_1 +\mathcal{I}_2 +\mathcal{I}_3),
\end{align*}
where
\begin{align*}
\mathcal{I}_1 &= \max_{d\in\{0,1\}}|| \tilde{Q}_n(d,
Z; \hat{\globparam}^{Q, I^c_k}_n)-Q(d,Z)||_{\mathbb{P}_n ,2},\\
\mathcal{I}_2 &= || \frac{T ( Y - \tilde{Q}_n(1,
\embedding; \hat{\globparam}^{Q, I^c_k}_n) )}{\tilde{g}_n(\cdot;\hat{\globparam}^{g, I^c_k}_n)} - \frac{T ( Y - Q( 1 , Z ) )}{g(\embedding)} ||_{\mathbb{P}_n ,2},\\
\mathcal{I}_3 &=||\frac{(1-T) ( Y - \tilde{Q}_n(0,
\embedding; \hat{\globparam}^{Q, I^c_k}_n))}{1-\tilde{g}_n(\cdot;\hat{\globparam}^{g, I^c_k}_n)} - \frac{(1-T) ( Y - Q( 0 , Z ) )}{1-g(\embedding)} ||_{\mathbb{P}_n ,2},
\end{align*}

We bound $\mathcal{I}_1, \mathcal{I}_2, \text{ and } \mathcal{I}_3$ in turn. First, $\mathbb{P}_n(\mathcal{I}_1 > \delta_{n_K})\leq\Delta_{n_K}\rightarrow 0$ by \Cref{assumption:rate}, and so
$\mathcal{I}_1 = o_{\mathbb{P}_n}(1)$. Also, on the event that
\begin{align}
&\mathbb{P}_n ( \varepsilon\leq \tilde{g}_n ( Z ; I^c_k )\leq 1 - \varepsilon  ) = 1\label{eq:event}\\
&||\tilde{Q}_n(1,
\embedding; \hat{\globparam}^{Q, I^c_k}_n)-Q(1,Z)||_{\mathbb{P}_n,2} +||\tilde{g}_n(\cdot;\hat{\globparam}^{g, I^c_k}_n)-g(Z)||_{\mathbb{P}_n,2}\leq\delta_{n_K}\label{eq:evt2},
\end{align}
which happens with $P_{\mathbb{P}_n}$-probability at least $1-\Delta_{n_K}$ by \Cref{assumption:rate}, 
\begin{align*}
\mathcal{I}_2
&\leq\varepsilon^{-2}||Tg(Z)(Y -\tilde{Q}_n(1,\embedding; \hat{\globparam}^{Q, I^c_k}_n))-T\tilde{g}_n ( Z ; I^c_k )(Y -Q(1,Z))||_{\mathbb{P}_n,2}\\
&\leq \varepsilon^{-2}|| g(Z)(Q(1, Z) + \zeta - \tilde{Q}_n(1,\embedding; \hat{\globparam}^{Q, I^c_k}_n)) - \tilde{g}_n(Z; I^c_k)\zeta||_{\mathbb{P}_n,2}\\
&\leq \varepsilon^{-2}||g(Z)(\tilde{Q}_n(1,\embedding; \hat{\globparam}^{Q, I^c_k}_n)-Q(1,Z))||_{\mathbb{P}_n,2} +||(\tilde{g}_n ( Z ; I^c_k )-g(Z))\zeta||_{\mathbb{P}_n,2}\\
&\leq \varepsilon^{-2}||\tilde{Q}_n(1,\embedding; \hat{\globparam}^{Q, I^c_k}_n)-Q(1,Z)||_{\mathbb{P}_n,2} +\sqrt{C}||\tilde{g}_n ( Z ; I^c_k )-g(Z)||_{\mathbb{P}_n,2}\\
&\leq \varepsilon^{-2}(\delta_{n_K} +\sqrt{C}\delta_{n_K})\rightarrow 0,
\end{align*}

where the first inequality follows from \Cref{eq:event} and
\Cref{assumption:rate}, the second from the facts that $T \in \{0,
1\}$ and for $T = 1, Y = Q(1, Z) + \zeta$, the third from the triangle
inequality, the fourth from the facts that $\mathbb{P}_n(g(Z)\leq 1) =
1$ and $\mathbb{P}_n(\E{\mathbb{P}_n}{\zeta^2\g Z}\leq C) = 1$ in
\Cref{assumption:moment}, the fifth from \Cref{eq:evt2}, and the last
assertion follows since $\delta_{n_K} \rightarrow 0$. Hence,
$\mathcal{I}_2 = o_{\mathbb{P}_n}(1)$. In addition, the same argument
shows that $\mathcal{I}_3 = o_{\mathbb{P}_n}(1)$, and so \Cref{eq:step2} follows.

Step 3: This step demonstrates \Cref{eq:step3}. Observe that since
$\psi_0 = \E{\mathbb{P}_n} {Q(1, Z) - Q(0,Z)}$, the left-hand side of
\Cref{eq:step3} is equal to
\begin{align*}
\mathcal{I}_4 = &\sqrt{n}\int\frac{\tilde{g}_n ( Z ; I^c_k )-g(z)}{\tilde{g}_n ( Z ; I^c_k )}  \cdot (\tilde{Q}_n(1,\embedding; \hat{\globparam}^{Q, I^c_k}_n) - Q(1,z))\\
& + \frac{\tilde{g}_n ( Z ; I^c_k )-g(z)}{1-\tilde{g}_n ( Z ; I^c_k )} \cdot (Q(0, z; I^c_k) - Q(0, z))
\dif \mathbb{P}_n(z).
\end{align*}
But on the event that
\begin{align*}
\mathbb{P}_n ( \varepsilon\leq \tilde{g}_n ( Z ; I^c_k ) \leq 1 - \varepsilon  ) = 1
\end{align*}
and
\begin{align*}
\max_{d\in\{0,1\}} ||\tilde{Q}_n(d,\embedding; \hat{\globparam}^{Q, I^c_k}_n) - Q(d, Z)||_{\mathbb{P}_n,2} \cdot ||\tilde{g}_n(Z; I^c_k) - g(Z)||_{\mathbb{P}_n,2}\leq \delta_{n_K}\cdot n_K^{-1/2},
\end{align*}

which happens with $P_{\mathbb{P}_n}$-probability at least $1-\Delta_{n_K}$ by \Cref{assumption:rate},the Cauchy-Schwarz inequality implies that
\begin{align*}
\mathcal{I}_4\leq \frac{2\sqrt{n}}{\varepsilon } \max_{d\in\{0,1\}} ||\tilde{Q}_n(d,\embedding; \hat{\globparam}^{Q, I^c_k}_n)-Q(d,Z)||_{\mathbb{P}_n,2} \cdot||\tilde{g}_n ( Z ; I^c_k )-g(Z)||_{\mathbb{P}_n,2}\leq \frac{2\delta_{n_K}}{\varepsilon} \rightarrow 0,  
\end{align*}
which gives \Cref{eq:step3}.

\end{proof}

\end{document}